\newif\ifJOURNAL
\newif\ifCONF    
\newif\ifarXiv
\newif\ifWP
\newif\ifFULL
\newif\ifPURE
\newif\ifnotCONF  
\newif\ifnotarXiv  
\newif\ifTR  
\newif\ifnotTR
\newif\ifCONFnotPURE  
  \title{The fundamental nature of the log loss function}
  \author{Vladimir Vovk\\
  \texttt{v.vovk{\rm@}rhul.ac.uk}}
  \title{The Fundamental Nature of the Log Loss Function}
  \author{Vladimir Vovk}
\DeclareMathOperator{\Expect}{\mathbb{E}} 
\DeclareMathOperator{\Loss}{Loss}         
\newcommand{\K}{\mathcal{K}}        
\newcommand{\Extra}[1]{}
  \renewcommand{\Extra}[1]{\red{#1}}
  \newcommand{\red}[1]{\textcolor{red}{#1}}
  \newcommand{\bluebegin}{\begingroup\color{blue}}
  \newcommand{\blueend}{\endgroup}
\begin{document}
  \mainmatter
  \title{The Fundamental Nature of the Log Loss Function}
  \titlerunning{Log Loss}
  \author{Vladimir Vovk}
  \authorrunning{V.~Vovk}
  \institute{Department of Computer Science, Royal Holloway, University of London,\\
    Egham, Surrey, UK}
  \toctitle{The Fundamental Nature of the Log Loss Function}
  \tocauthor{Vladimir Vovk}
\fi

\ifPURE
\begin{document}
  \title{The Fundamental Nature of the Log Loss Function}
  \author{Vladimir Vovk\\
    Department of Computer Science, Royal Holloway, University of London,\\
    Egham, Surrey, UK}
\fi

\ifnotCONF
  \newtheorem{lemma}{Lemma}
  \newtheorem{proposition}{Proposition}
  \newtheorem{corollary}{Corollary}
  \newtheorem{theorem}{Theorem}
  \theoremstyle{definition}
  \newtheorem*{remark}{Remark}
  \begin{document}
\fi

\ifPURE
  \newtheorem{lemma}{Lemma}
  \newtheorem{proposition}{Proposition}
  \newtheorem{corollary}{Corollary}
  \newtheorem{theorem}{Theorem}
  \theoremstyle{definition}
  \newtheorem*{remark}{Remark}
\fi

\newenvironment{Question}
  {\trivlist\item[]\itshape}
  {\endtrivlist}

\maketitle

\begin{abstract}
  The standard loss functions used in the literature on probabilistic prediction
  are the log loss function, the Brier loss function, and the spherical loss function;
  however, any computable proper loss function can be used for comparison of prediction algorithms.
  This note shows that the log loss function is most selective
  in that any prediction algorithm that is optimal for a given data sequence
  (in the sense of the algorithmic theory of randomness) under the log loss function
  will be optimal under any computable proper mixable loss function;
  on the other hand, there is a data sequence and a prediction algorithm
  that is optimal for that sequence under either of the two other standard loss functions
  but not under the log loss function.
\ifCONF
  \begin{keywords}
    algorithmic theory of randomness,
    mixability, predictive complexity, predictive randomness,
    probabilistic prediction, proper loss functions.
  \end{keywords}
\fi
\end{abstract}

\section{Introduction}

\ifCONF
  In his work Yuri Gurevich has emphasized practical aspects of algorithmic randomness.
  In particular, he called for creating a formal framework
  allowing us to judge whether observed events can be regarded as random
  or point to something dubious going on
  (see, e.g., the discussion of the lottery organizer's wife winning the main prize
  in \cite{Gurevich/Passmore:2012}).
  The beautiful classical theory of randomness started by Andrey Kolmogorov and Per Martin-L\"of
  has to be restricted in order to achieve this goal
  and avoid its inherent incomputabilities and asymptotics.

  This note tackles another practically-motivated question:
  what are the best loss functions for evaluating probabilistic prediction algorithms?
  Answering this question, however, requires extending rather than restricting
  the classical theory of algorithmic randomness.
\fi

In the empirical work on probabilistic prediction in machine learning
(see, e.g., \cite{Caruana/NM:2006})
the most standard loss functions are log loss and Brier loss,
and spherical loss is a viable alternative;
all these loss functions will be defined later in this note.
It is important to understand which of these three loss functions
is likely to lead to better prediction algorithms.
We formalize this question using a generalization of the notion of Kolmogorov complexity
called predictive complexity (see, e.g., \cite{Kalnishkan:2015};
it is defined in Section~\ref{sec:repetitive}).
Our answer is that the log loss function is likely to lead to better prediction algorithms
as it is more selective:
if a prediction algorithm is optimal under the log loss function,
it will be optimal under the Brier and spherical loss functions,
but the opposite implications are not true in general.

As we discuss at the end of Section~\ref{sec:repetitive},
the log loss function corresponds to the classical theory of randomness.
Therefore, our findings confirm once again the importance of the classical theory
and are not surprising at all from the point of view of that theory.
But from the point of view of experimental machine learning,
our recommendation to use the log loss function rather than Brier or spherical
is less trivial.

This note is, of course, not the first to argue that the log loss function is fundamental.
For example, David Dowe has argued for it since at least 2008
(\cite{Dowe:2008}, footnote~175;
see \cite{Dowe:2013}, Section~4.1, for further references).
Another paper supporting the use of the log loss function is Bickel's \cite{Bickel:2007}.
\ifFULL\bluebegin
  According to Bickel (2007):
  if one is using the output of a scoring rule to evaluate the ability of the assessor,
  logarithmic is the only scoring rule consistent with the likelihood principle or the use of Bayes factors
  to update the weights assigned to different experts or forecasting systems (Winkler 1969, 1996).
  Bickel (2007) himself supports the use of the log loss function
  (comparing it to the Brier and spherical loss functions).
\blueend\fi


\section{Loss Functions}

We are interested in the problem of binary probabilistic prediction:
the task is to predict a binary label $y\in\{0,1\}$
with a number $p\in[0,1]$;
intuitively, $p$ is the predicted probability that $y=1$.
The quality of the prediction $p$ is measured by a \emph{loss function}
$\lambda:[0,1]\times\{0,1\}\to\mathbb{R}\cup\{+\infty\}$.
Intuitively, $\lambda(p,y)$ is the loss suffered by a prediction algorithm
that outputs a prediction $p$ while the actual label is $y$;
the value $+\infty$ (from now on abbreviated to $\infty$) is allowed.
Following \cite{Reid/etal:2014},
we will write $\lambda_y(p)$ in place of $\lambda(p,y)$,
and so identify $\lambda$ with the pair of functions $(\lambda_0,\lambda_1)$
where $\lambda_0:[0,1]\to\mathbb{R}\cup\{+\infty\}$ and $\lambda_1:[0,1]\to\mathbb{R}\cup\{+\infty\}$.
We will assume that $\lambda_0(0)=\lambda_1(1)=0$,
that the function $\lambda_0$ is increasing,
that the function $\lambda_1$ is decreasing,
and that $\lambda_y(p)<\infty$ unless $p\in\{0,1\}$.

A loss function $\lambda$ is called \emph{$\eta$-mixable} for $\eta\in(0,\infty)$
if the set
$$
  \left\{
    (u,v)\in[0,1]^2
  \mid
    \exists p\in[0,1]:
    u \le e^{-\eta\lambda(p,0)} \text{ and }
    v \le e^{-\eta\lambda(p,1)}
  \right\}
$$
is convex;
we say that $\lambda$ is \emph{mixable} if it is $\eta$-mixable
for some $\eta$.

A loss function $\lambda$ is called \emph{proper} if,
for all $p,q\in[0,1]$,
\begin{equation}\label{eq:proper}
  \Expect_p
  \lambda(p,\cdot)
  \le
  \Expect_p
  \lambda(q,\cdot),
\end{equation}
where $\Expect_pf:=pf(1)+(1-p)f(0)$ for $f:\{0,1\}\to\mathbb{R}$.
It is \emph{strictly proper}
if the inequality in~\eqref{eq:proper} is strict whenever $q\ne p$.

We will be only interested in computable loss functions
(the notion of computability is not defined formally in this note;
see, e.g., \cite{Kalnishkan:2015}).
We will refer to the loss functions satisfying the properties listed above
as \emph{CPM} (computable proper mixable) loss functions.

Besides, we will sometimes make the following \emph{smoothness assumptions}:
\begin{itemize}
\item
  $\lambda_0$ is infinitely differentiable over the interval $[0,1)$
  (the derivatives at $0$ being one-sided);
\item
  $\lambda_1$ is infinitely differentiable over the interval $(0,1]$
  (the derivatives at $1$ being one-sided);
\item
  for all $p\in(0,1)$, $(\lambda'_0(p),\lambda'_1(p))\ne0$.
\end{itemize}
We will refer to the loss functions satisfying all the properties listed above
as \emph{CPMS} (computable proper mixable smooth) loss functions.

\ifFULL\bluebegin
  Not all proper mixable loss functions are strictly proper:
  the prediction set can have corners.
\blueend\fi

\subsection*{Examples}

The most popular loss functions in machine learning are the \emph{log loss function}
\begin{equation*} 
  \lambda_1(p)
  :=
  -\ln p,
  \quad
  \lambda_0(p)
  :=
  -\ln(1-p)
\end{equation*}
and the \emph{Brier loss function}
$$
  \lambda(p,y)
  :=
  (y-p)^2.
$$
Somewhat less popular is the \emph{spherical loss function}
\begin{equation*} 
  \lambda_1(p)
  :=
  1 - \frac{p}{\sqrt{p^2+(1-p)^2}},
  \quad
  \lambda_0(p)
  :=
  1 - \frac{1-p}{\sqrt{p^2+(1-p)^2}}.
\end{equation*}
All three loss functions are mixable,
as we will see later.
They are also computable (obviously),
strictly proper (this can be checked by differentiation),
and satisfy the smoothness conditions (obviously).
Being computable and strictly proper,
these loss functions can be used to measure the quality of probabilistic predictions.

\subsection*{Mixability and Propriety}

Intuitively, propriety can be regarded as a way of parameterizing loss functions,
and we get it almost for free for mixable loss functions.
The essence of a loss function is its \emph{prediction set}
\begin{equation}\label{eq:prediction-set}
  \left\{
    (\lambda_0(p),\lambda_1(p))
  \mid
    p\in[0,1]
  \right\}.
\end{equation}
When given a prediction set,
we can parameterize it by defining $(\lambda_0(p),\lambda_1(p))$
to be the point $(x,y)$ of the prediction set at which $\inf_{(x,y)}(py+(1-p)x)$ is attained.
This will give us a proper loss function.
And if the original loss function satisfies the smoothness conditions
(and so, intuitively, the prediction set does not have corners),
the new loss function will be strictly proper.

\section{Repetitive Predictions}
\label{sec:repetitive}

Starting from this section we consider the situation,
typical in machine learning,
where we repeatedly observe data $z_1,z_2,\ldots$
and each observation $z_t=(x_t,y_t)\in\mathbf{Z}=\mathbf{X}\times\{0,1\}$
consists of an \emph{object} $x_t\in\mathbf{X}$ and its \emph{label} $y_t\in\{0,1\}$.
Let us assume, for simplicity, that $\mathbf{X}$ is a finite set,
say a set of natural numbers.

A \emph{prediction algorithm} is a computable function
$F:\mathbf{Z}^*\times\mathbf{X}\to[0,1]$;
intuitively, given a data sequence $\sigma=(z_1,\ldots,z_T)$
and a new object $x$,
$F$ outputs a prediction $F(\sigma,x)$ for the label of $x$.
For any data sequence $\sigma=(z_1,\ldots,z_T)$ and loss function $\lambda$,
we define the cumulative loss that $F$ suffers on $\sigma$ as
$$
  \Loss_F^{\lambda}(\sigma)
  :=
  \sum_{t=1}^T
  \lambda(F(z_1,\ldots,z_{t-1},x_t),y_t)
$$
(where $z_t=(x_t,y_t)$ and $\infty+a$ is defined to be $\infty$ for any $a\in\mathbb{R}\cup\{\infty\}$).
Functions $\Loss_F^{\lambda}:\mathbf{Z}^*\to\mathbb{R}$
that can be defined this way for a given $\lambda$
are called \emph{loss processes under $\lambda$}.
In other words, $L:\mathbf{Z}^*\to\mathbb{R}$ is a loss process under $\lambda$
if and only if $L(\Box)=0$ (where $\Box$ is the empty sequence) and
\begin{equation}\label{eq:loss}
  \forall\sigma\in\mathbf{Z}^* \,
  \forall x\in\mathbf{X} \,
  \exists p\in[0,1] \,
  \forall y\in\{0,1\}:
  L(\sigma,x,y)
  =
  L(\sigma)
  +
  \lambda(p,y).
\end{equation}

A function $L:\mathbf{Z}^*\to\mathbb{R}$ is said to be a \emph{superloss process under $\lambda$}
if \eqref{eq:loss} holds with $\ge$ in place of $=$.
If $\lambda$ is computable and mixable,
there exists a smallest, to within an additive constant,
upper semicomputable superloss process:
$$
  \exists L_1 \,
  \forall L_2 \,
  \exists c\in\mathbb{R} \,
  \forall\sigma\in\mathbf{Z}^*:
  L_1(\sigma)
  \le
  L_2(\sigma)
  +
  c,
$$
where $L_1$ and $L_2$ range over upper semicomputable superloss processes under $\lambda$.
(For a precise statement and proof,
see \cite{Kalnishkan:2015}, Theorem~1, Lemma~6, and Corollary~3;
\cite{Kalnishkan:2015} only considers the case of a trivial one-element $\mathbf{X}$,
but the extension to the case of general $\mathbf{X}$ is easy.)
For each computable mixable $\lambda$ (including the log, Brier, and spherical loss functions),
fix such a smallest upper semicomputable superloss process;
it will be denoted $\K^{\lambda}$, and $\K^{\lambda}(\sigma)$
will be called the \emph{predictive complexity} of $\sigma\in\mathbf{Z}^*$ under $\lambda$.
The intuition behind $\K^{\lambda}(\sigma)$ is that this is the loss
of the ideal prediction strategy whose computation is allowed
to take an infinite amount of time.

In this note we consider infinite data sequences $\zeta\in\mathbf{Z}^{\infty}$,
which are idealizations of long finite data sequences.
If $\zeta=(z_1,z_2,\ldots)\in\mathbf{Z}^{\infty}$ and $T$ is a nonnegative integer,
we let $\zeta^T$ to stand for the prefix $z_1\ldots z_T$ of $\zeta$ of length $T$.

The \emph{randomness deficiency} of $\sigma\in\mathbf{Z}^*$
with respect to a prediction algorithm $F$ under a computable mixable loss function $\lambda$
is defined to be
\begin{equation}\label{eq:deficiency}
  D^{\lambda}_F(\sigma)
  :=
  \Loss_F^{\lambda}(\sigma)
  -
  \K^{\lambda}(\sigma);
\end{equation}
since $\Loss_F^{\lambda}$ is upper semicomputable
(\cite{Kalnishkan:2015}, Section~3.1),
the function $D^{\lambda}_F:\mathbf{Z}^*\to\mathbb{R}$ is bounded below.
Notice that the indeterminacy $\infty-\infty$ never arises in \eqref{eq:deficiency}
as $\K^{\lambda}<\infty$.
We will sometimes replace the upper index $\lambda$ in any of the three terms of \eqref{eq:deficiency}
by ``$\ln$'' 
in the case where $\lambda$ is the log loss function.

Let us say that $\zeta\in\mathbf{Z}^{\infty}$
is \emph{random with respect to $F$ under $\lambda$}
if 
$$
  \sup_T
  D^{\lambda}_F(\zeta^T)
  <
  \infty.
$$
The intuition is that in this case $F$ is an optimal prediction algorithm for $\zeta$
under~$\lambda$.

\ifFULL\bluebegin
  A sanity-check statement:
  \begin{proposition}
    $\zeta$ is log-random with respect to a prediction algorithm $F$
    for $F$-almost all $\zeta$
    (when $F$ is identified with the corresponding probability measure).
  \end{proposition}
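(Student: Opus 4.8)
The plan is to recognize this as the ``easy half'' of the Levin--Schnorr characterization of randomness and to prove it by a nonnegative-supermartingale argument (Ville's inequality). First I would make precise the identification of $F$ with a measure $\Prob$ on $\mathbf{Z}^{\infty}$: fix any computable distribution generating the objects $x_1,x_2,\dots$ and let $F$ supply the conditional label probabilities, so that, writing $p_t:=F(z_1,\dots,z_{t-1},x_t)$, the $\Prob$-conditional probability of the labels given the objects is $\prod_{t=1}^T p_t^{y_t}(1-p_t)^{1-y_t}$. The entire argument will be carried out conditionally on the object sequence, so the particular choice of object distribution is immaterial. The key point of this setup is the identity
$$
  e^{-\Loss_F^{\ln}(\zeta^T)}
  =
  \prod_{t=1}^T p_t^{y_t}(1-p_t)^{1-y_t},
$$
i.e.\ $e^{-\Loss_F^{\ln}}$ is exactly the conditional label likelihood.

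Next I would unwind the log-loss superloss inequality for $\K^{\ln}$. Writing $M:=e^{-\K^{\ln}}$ and specializing $\lambda_1(p)=-\ln p$, $\lambda_0(p)=-\ln(1-p)$ in the superloss condition shows, for every $\sigma$ and every object $x$, that $M(\sigma,x,0)+M(\sigma,x,1)\le M(\sigma)$; that is, $M$ is a (lower semicomputable, and by $\K^{\ln}<\infty$ strictly positive) semimeasure in the label direction. I would then consider
$$
  W_T
  :=
  e^{D^{\ln}_F(\zeta^T)}
  =
  \frac{M(\zeta^T)}{e^{-\Loss_F^{\ln}(\zeta^T)}}.
$$
A direct computation---using that the denominator factorizes into the one-step conditional probabilities $p_t^{y_t}(1-p_t)^{1-y_t}$ and that $M$ contracts under the label sum $\sum_{y}M(\sigma,x,y)\le M(\sigma)$---gives $\Expect[W_{T+1}\mid z_1,\dots,z_T]\le W_T$, so $(W_T)$ is a nonnegative supermartingale under $\Prob$ (conditionally on the objects), with $\Expect W_0=M(\Box)<\infty$ a finite constant.

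Finally, Ville's inequality $\Prob(\sup_T W_T\ge\lambda)\le\Expect W_0/\lambda$ (equivalently, almost sure convergence of nonnegative supermartingales) yields $\sup_T W_T<\infty$ for $\Prob$-almost all label sequences, for each fixed object sequence; integrating over the objects by Fubini gives $\sup_T W_T<\infty$ $\Prob$-almost surely. Taking logarithms, $\sup_T D^{\ln}_F(\zeta^T)<\infty$ for $\Prob$-almost all $\zeta$, which is precisely log-randomness of $\zeta$ with respect to $F$.

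I expect the only real obstacles to be bookkeeping rather than conceptual: pinning down the identification of $F$ with $\Prob$ in the presence of the unmodeled objects (resolved by conditioning on the object sequence), and the degenerate case where some $p_t\in\{0,1\}$ makes $e^{-\Loss_F^{\ln}}$ vanish, which occurs only on a $\Prob$-null set and can be discarded. The substantive input is simply the observation that the ratio of the semimeasure $e^{-\K^{\ln}}$ to the measure $e^{-\Loss_F^{\ln}}$ is a supermartingale, which is exactly what the log-loss superloss inequality is designed to guarantee.
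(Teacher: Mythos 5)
Your proof is correct. A preliminary remark on comparison: the paper states this proposition only as an unproven ``sanity-check statement'' in the author's working notes, so there is no proof in the source to compare against; your argument has to be judged on its own, and it stands. Unwinding the log-loss superloss inequality to get $M(\sigma,x,0)+M(\sigma,x,1)\le M(\sigma)$ for $M:=e^{-\K^{\ln}}$, recognizing $W_T=e^{D^{\ln}_F(\zeta^T)}=M(\zeta^T)/P(\zeta^T)$ as a nonnegative supermartingale under the measure induced by $F$, and finishing with Ville's inequality is exactly the natural route, and it is consistent with the machinery the paper itself gestures at (the notes following the proposition speak of supermartingales bounded below, and the ``Log Randomness'' subsection identifies $D^{\ln}_F$ with the classical deficiency $\ln(M(\sigma)/P(\sigma))$). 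An alternative, shorter-on-paper route would be to invoke that subsection directly: log-randomness with respect to $F$ is equivalent to Martin-L\"of randomness with respect to $P$, and the classical theorem that $P$-almost every sequence is Martin-L\"of random (for computable $P$) gives the proposition at once; but the standard proof of that classical theorem is precisely your supermartingale argument, so the two routes coincide at bottom. Two bookkeeping points: (i) the conditioning-on-objects-plus-Fubini step can be streamlined, since the computation $\Expect[W_{T+1}\mid z_1,\dots,z_T]\le W_T$ goes through with respect to the full filtration even when the object distribution depends on past labels---the object factors cancel in the likelihood ratio regardless; (ii) you implicitly use $W_0=e^{-\K^{\ln}(\Box)}<\infty$, which indeed holds because $\K^{\ln}<\infty$ everywhere, and this is worth saying explicitly since Ville's inequality needs a finite initial value.
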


  Two notions of randomness:
  universal (Martin-L\"of style) and non-universal (simplified Schnorr-style).
  The latter ($\zeta$ is said to be random with respect to a computable probability measure $P$
  if $Q/P$ is bounded on $\zeta$ for any computable probability measure $Q$)
  is applicable to all computable proper loss functions,
  whereas the former is only applicable to computable proper mixable loss functions.
  For Theorem~\ref{thm:simple} we need the former definition,
  since the supermartingale is required to be bounded below.
\blueend\fi

\subsection*{Log Randomness}

In the case where $\lambda$ is the log loss function and $\mathbf{X}$ is a one-element set,
the predictive complexity of a finite data sequence $\sigma$
(which is now a binary sequence if we ignore the uninformative objects)
is equal, to within an additive constant, to $-\ln M(\sigma)$,
where $M$ is Levin's \emph{a priori} semimeasure.
(In terms of this note, a \emph{semimeasure} can be defined as a process of the form $e^{-L}$
for some superloss process $L$ under the log loss function;
\emph{Levin's \emph{a priori} semimeasure} is a largest, to within a constant factor,
lower semicomputable semimeasure.)
The randomness deficiency $D^{\ln}_F(\sigma)$ of $\sigma$
with respect to a prediction algorithm $F$
is then, to within an additive constant, $\ln(M(\sigma)/P(\sigma))$,
where $P$ is the probability measure corresponding to~$F$,
$$
  P(y_1,\ldots,y_T)
  :=
  \bar p_1\cdots\bar p_T,
  \quad
  \bar p_t
  :=
  \begin{cases}
    F(y_1,\ldots,y_{t-1}) & \text{if $y_t=1$}\\
    1-F(y_1,\ldots,y_{t-1}) & \text{if $y_t=0$}
  \end{cases}
$$
(we continue to ignore the objects, which are not informative).
Therefore, $D^{\ln}_F(\sigma)$ is a version of the classical randomness deficiency of $\sigma$,
and $\zeta\in\{0,1\}^{\infty}$ is random with respect to $F$ under the log loss function
if and only if $\zeta$ is random with respect to $P$ in the sense of Martin-L\"of.

\section{A Simple Statement of Fundamentality}

In this section, we consider computable proper mixable loss functions.
\begin{theorem}\label{thm:simple}
  Let $\lambda$ be a CPM loss function.
  If a data sequence $\zeta\in\mathbf{Z}^{\infty}$ is random under the log loss function
  with respect to a prediction algorithm $F$,
  it is random under $\lambda$ with respect to~$F$.
\end{theorem}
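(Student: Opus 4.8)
The content is entirely the upper bound: $D^{\lambda}_F\ge -O(1)$ is automatic, since $\Loss^{\lambda}_F$ is itself an upper semicomputable superloss process under $\lambda$ and $\K^{\lambda}$ is the smallest such. So the plan is to manufacture, out of the deficiency $D^{\lambda}_F$ itself, a lower semicomputable semimeasure that the a priori semimeasure must dominate; log-randomness is then exactly what turns this domination into a uniform bound. Throughout, fix $\eta$ with $\lambda$ $\eta$-mixable, write $P:=e^{-\Loss^{\ln}_F}$ for the probability measure associated with $F$ and $M:=e^{-\K^{\ln}}$ for Levin's a priori semimeasure (as in the subsection on log randomness; objects carry no loss and are summed out against a fixed computable object measure exactly as in the one-element case, so I suppress them below). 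The hypothesis $\sup_T D^{\ln}_F(\zeta^T)<\infty$ then says precisely that $M(\zeta^T)\le C\,P(\zeta^T)$ for all $T$ and some constant $C$.

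The candidate is
\[
  W := e^{\eta D^{\lambda}_F}\,P = e^{\eta(\Loss^{\lambda}_F-\K^{\lambda})}\,P .
\]
Since $\Loss^{\lambda}_F$ and $\K^{\lambda}$ are upper semicomputable, $W$ is lower semicomputable, with $W(\Box)=1$. The heart of the proof is to show $W$ is a semimeasure, i.e.\ $\sum_y W(\sigma,y)\le W(\sigma)$ for every $\sigma$. Fixing $\sigma$, writing $p:=F(\sigma)$, and setting $(u_0,u_1):=\bigl(e^{-\eta(\K^{\lambda}(\sigma,0)-\K^{\lambda}(\sigma))},\,e^{-\eta(\K^{\lambda}(\sigma,1)-\K^{\lambda}(\sigma))}\bigr)$, the superloss property of $\K^{\lambda}$ says exactly that $(u_0,u_1)$ lies in the mixability set $S$ of $\lambda$. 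A one-line computation of the increments then reduces the semimeasure inequality to
\[
  (1-p)\,e^{\eta\lambda(p,0)}\,u_0 + p\,e^{\eta\lambda(p,1)}\,u_1 \le 1
  \qquad\text{for all }(u_0,u_1)\in S .
\]

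This last inequality is the step I expect to be the main obstacle, and it is where both defining properties of $\lambda$ enter. It asserts that the line $a_0u_0+a_1u_1=1$, with $a_0=(1-p)e^{\eta\lambda(p,0)}$ and $a_1=p\,e^{\eta\lambda(p,1)}$, is a supporting line of the convex set $S$. One checks directly that this line passes through the boundary point of $S$ with parameter $p$ (the two exponentials cancel, leaving $(1-p)+p=1$), and that the propriety of $\lambda$ is exactly the statement that the line touches the boundary curve there: in the smooth case this is the first-order identity $(1-p)\lambda'_0(p)+p\lambda'_1(p)=0$, and in general it is the characterization of the proper parameterization as the point where the functional $(1-p)x+py$ is minimized over the prediction set. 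Mixability, i.e.\ convexity of $S$, then upgrades ``touches at one point'' to ``lies weakly above all of $S$''. Hence $W$ is a lower semicomputable semimeasure.

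It remains to read off the conclusion. By maximality of the a priori semimeasure there is a constant $c$ with $W\le c\,M$ everywhere, so on $\zeta$, using $M(\zeta^T)\le C\,P(\zeta^T)$,
\[
  e^{\eta D^{\lambda}_F(\zeta^T)}\,P(\zeta^T) = W(\zeta^T) \le c\,M(\zeta^T) \le cC\,P(\zeta^T) .
\]
Dividing by $P(\zeta^T)>0$ (positive because log-randomness forces $\Loss^{\ln}_F(\zeta^T)<\infty$) gives $D^{\lambda}_F(\zeta^T)\le \eta^{-1}\ln(cC)$ for all $T$, which is the required bound and shows $\zeta$ is random under $\lambda$ with respect to $F$.
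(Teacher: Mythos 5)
Your proposal is correct and is essentially the paper's own proof written in multiplicative (semimeasure) notation: your $W$ is exactly $e^{-L'}$ for the process $L'=\eta\K^{\lambda}+\Loss^{\ln}_F-\eta\Loss^{\lambda}_F$ that the paper shows to be an upper semicomputable superloss process under log loss, your supporting-line inequality $(1-p)e^{\eta\lambda(p,0)}u_0+pe^{\eta\lambda(p,1)}u_1\le 1$ on the mixability set is precisely the paper's translated exponential curve through $(\lambda_0(p),\lambda_1(p))$ lying Southeast of the prediction set (with propriety fixing the translate and mixability making it supporting), and maximality of $M$ is minimality of $\K^{\ln}$. The only difference is presentational: the paper records the conclusion as the uniform bound $D^{\ln}_F\ge\eta D^{\lambda}_F-c_{\lambda}$ before restricting to $\zeta$, whereas you specialize to $\zeta$ at the end.
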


A special case of this theorem is stated as Proposition~16 in \cite{Vovk:2001}.

Let us say that a CPM loss function $\lambda$ is \emph{fundamental}
if it can be used in place of the log loss function in Theorem~\ref{thm:simple}.
The proof of the theorem will in fact demonstrate its following quantitative form:
for any computable $\eta>0$ and any computable proper $\eta$-mixable $\lambda$
there exists a constant $c_{\lambda}$ such that,
for any prediction algorithm $F$,
\begin{equation}\label{eq:positive}
  D_F^{\ln}
  \ge
  \eta D_F^{\lambda} - c_{\lambda}.
\end{equation}
Let us define the \emph{mixability constant} $\eta_{\lambda}$
of a loss function $\lambda$
as the supremum of $\eta$ such that $\lambda$ is $\eta$-mixable.
It is known that a mixable loss function $\lambda$ is $\eta_{\lambda}$-mixable
(\cite{Vovk:1998game}, Lemmas~10 and~12);
therefore, \eqref{eq:positive} holds for $\eta=\eta_{\lambda}$,
provided $\eta_{\lambda}$ is computable.

If $\mathbf{X}$ is a one-element set
(and so the objects do not play any role and can be ignored),
the notion of randomness under the log loss function
coincides with the standard Martin-L\"of randomness,
as discussed in the previous section.
Theorem~\ref{thm:simple} shows that other notions of randomness
are either equivalent or weaker.

A \emph{superprediction} is a point in the plane
that lies Northeast of the prediction set \eqref{eq:prediction-set}
(i.e., a point $(x,y)\in\mathbb{R}^2$ such that $\lambda_0(p)\le x$ and $\lambda_1(p)\le y$
for some $p\in[0,1]$).

\begin{proof}[Proof of Theorem~\ref{thm:simple}]   
  We will prove \eqref{eq:positive} for a fixed $\eta\in(0,\infty)$
  such that $\eta$ is computable and $\lambda$ is $\eta$-mixable.
  Let $L$ be a superloss process under $\lambda$ and $F$ be a prediction algorithm.
  Fix temporarily $(\sigma,x)\in\mathbf{Z}^*\times\mathbf{X}$ and set $p:=F(\sigma,x)\in[0,1]$;
  notice that $(a,b):=(L(\sigma,x,0)-L(\sigma),L(\sigma,x,1)-L(\sigma))$ is a $\lambda$-superprediction.
  By the definition of $\eta$-mixability there exists a parallel translation of the curve
  $e^{-\eta x}+e^{-\eta y}=1$
  that passes through the point $\lambda^p:=(\lambda_0(p),\lambda_1(p))$
  and lies Southeast of the prediction set of $\lambda$.
  Let $h$ be the affine transformation of the plane mapping that translation
  onto the curve $e^{-x}+e^{-y}=1$;
  notice that $h$ is the composition of the scaling $(x,y)\mapsto\eta(x,y)$ by $\eta$
  and then parallel translation moving the point $\eta\lambda^p$ to the point $(-\ln(1-p),-\ln p)$.
  The $\lambda$-superprediction $(a,b)$ is mapped by $h$ to the $\ln$-superprediction
  $$
    \left(
      \eta a + (-\ln(1-p)) - \eta\lambda_0(p),
      \eta b + (-\ln p) - \eta\lambda_1(p)
    \right).
  $$
  We can see that $\eta L+\Loss^{\ln}_F-\eta\Loss^{\lambda}_F$ is a superloss process under $\ln$.
  It is clear that this $\ln$-superloss process is upper semicomputable if $L$ is.
  Therefore, for some constant $c_{\lambda}$,
  $$
    \K^{\ln}
    \le
    \eta \K^{\lambda}+\Loss^{\ln}_F-\eta\Loss^{\lambda}_F
    +
    c_{\lambda},
  $$
  which is equivalent to~\eqref{eq:positive}.
  \ifCONFnotPURE\qed\fi
\end{proof}

\section{A Criterion of Fundamentality}

In this section, we only consider computable proper mixable loss functions
that satisfy, additionally, the smoothness conditions.
The main result of this section is the following elaboration of Theorem~\ref{thm:simple}
for CPMS loss functions.
\begin{theorem}\label{thm:criterion}
  A CPMS loss function $\lambda$ is fundamental if and only if
  \begin{equation}\label{eq:criterion-a}
    \inf_p
    (1-p)\lambda'_0(p)
    >
    0.
  \end{equation}
  Equivalently, it is fundamental if and only if
  \begin{equation}\label{eq:criterion-b}
    \inf_p
    (-p)
    \lambda'_1(p)
    >
    0.
  \end{equation}
\end{theorem}

We can classify CPMS loss functions $\lambda$ by their \emph{degree}
$$
  \deg(\lambda)
  :=
  \inf
  \left\{
    k:
    \lambda_0^{(k)}(0)\ne0 \text{ and } \lambda_1^{(k)}(1)\ne0
  \right\},
$$
where ${}^{(k)}$ stands for the $k$th derivative
and, as usual, $\inf\emptyset:=\infty$.
We will see later in this section that Theorem~\ref{thm:criterion}
can be restated to say that the fundamental loss functions are exactly those of degree~1.
Furthermore,
we will see that for a CPMS loss function $\lambda$ of degree $1<k<\infty$
there exist a data sequence $\zeta\in\mathbf{Z}^{\infty}$ and a prediction algorithm $F$
such that $\zeta$ is random with respect to $F$ under $\lambda$
while the randomness deficiency $D^{\ln}_F(\zeta^T)$
of $\zeta^T$ with respect to $F$ under the log loss function
grows almost as fast as $T^{1-1/k}$ as $T\to\infty$.

Straightforward calculations show that the log loss function has degree~1
and the Brier and spherical loss functions have degree~2.

In the proof of Theorem~\ref{thm:criterion} we will need the notion
of the \emph{signed curvature} of the \emph{prediction curve} $(\lambda_0(p),\lambda_1(p))$
at a point $p\in(0,1)$, which can be defined as
\begin{equation}\label{eq:curvature}
  k_{\lambda}(p)
  :=
  \frac{\lambda_0'(p)\lambda_1''(p)-\lambda_1'(p)\lambda_0''(p)}{(\lambda_0'(p)^2+\lambda_1'(p)^2)^{3/2}}.
\end{equation}
The mixability constant $\eta_{\lambda}$
(i.e., the largest $\eta$ for which $\lambda$ is $\eta$-mixable) is
\begin{equation*}
  \eta_{\lambda}
  =
  \inf_p
  \frac{k_{\lambda}(p)}{k_{\ln}(p)}.
\end{equation*}
Therefore, $\lambda$ is mixable if and only if
\begin{equation}\label{eq:mixability-1}
  \inf_p
  \frac{k_{\lambda}(p)}{k_{\ln}(p)}
  >
  0.
\end{equation}

\begin{lemma}\label{lem:fundamentality}
  A CPMS loss function $\lambda$ is fundamental if and only if
  $$
    \sup_p
    \frac{k_{\lambda}(p)}{k_{\ln}(p)}
    <
    \infty
  $$
  (cf.\ \eqref{eq:mixability-1}).
\end{lemma}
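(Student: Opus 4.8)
The plan is to reduce fundamentality to a quantitative inequality dual to \eqref{eq:positive}. By Theorem~\ref{thm:simple}, log-randomness already implies randomness under every CPM loss function, so $\lambda$ is fundamental if and only if randomness under $\lambda$ implies randomness under the log loss. I would deduce the latter implication from a bound
\[
  D^\lambda_F \ge \alpha D^{\ln}_F - c_\lambda
\]
holding for some computable $\alpha>0$, a constant $c_\lambda$, and every prediction algorithm $F$; boundedness of $D^\lambda_F$ along $\zeta$ then forces boundedness of $D^{\ln}_F$. To produce such a bound I would rerun the proof of Theorem~\ref{thm:simple} with the log loss and $\lambda$ interchanged: starting from an upper semicomputable superloss process $L$ under the log loss, set $M:=\alpha L+\Loss^\lambda_F-\alpha\Loss^{\ln}_F$ and check that $M$ is an upper semicomputable superloss process under $\lambda$, after which the optimality of $\K^\lambda$ yields the displayed bound.

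Everything thus hinges on when $M$ is a superloss process under $\lambda$. Fixing $(\sigma,x)$ and $p:=F(\sigma,x)$, the increment of $M$ is the image of the $\ln$-superprediction supplied by $L$ under the affine map $g(x,y):=\alpha(x,y)+\bigl(\lambda^p-\alpha\lambda^{\ln,p}\bigr)$, where $\lambda^p:=(\lambda_0(p),\lambda_1(p))$ and $\lambda^{\ln,p}:=(-\ln(1-p),-\ln p)$. This $g$ carries the log curve $e^{-x}+e^{-y}=1$ onto a parallel translate of $e^{-x/\alpha}+e^{-y/\alpha}=1$ through $\lambda^p$, a curve whose signed curvature \eqref{eq:curvature} at $\lambda^p$ equals $k_{\ln}(p)/\alpha$. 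The map sends every $\ln$-superprediction to a $\lambda$-superprediction exactly when the superprediction set of this translate is contained in that of $\lambda$, the two prediction curves being tangent at $\lambda^p$. For convex prediction curves this containment is governed, just as in the derivation of $\eta_\lambda=\inf_p k_\lambda(p)/k_{\ln}(p)$, by a comparison of signed curvatures: it holds iff $k_{\ln}(p)/\alpha\ge k_\lambda(p)$, i.e.\ $\alpha\le k_{\ln}(p)/k_\lambda(p)$. A single computable $\alpha>0$ works for all $p$ iff $\inf_p k_{\ln}(p)/k_\lambda(p)>0$, that is, iff $\sup_p k_\lambda(p)/k_{\ln}(p)<\infty$. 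This settles the ``if'' direction and exhibits it as the exact dual of the mixability criterion \eqref{eq:mixability-1}.

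For the converse I would argue by contraposition, constructing, under the assumption $\sup_p k_\lambda(p)/k_{\ln}(p)=\infty$, a data sequence $\zeta$ and an algorithm $F$ with $\zeta$ random under $\lambda$ but not under the log loss. Since the ratio $k_\lambda/k_{\ln}$ is continuous on $(0,1)$, its unboundedness forces a sequence $p_n$ approaching an endpoint along which $k_\lambda(p_n)/k_{\ln}(p_n)\to\infty$; near such a point a fixed deviation in the predicted probability costs asymptotically far less under $\lambda$ than under the log loss. I would then let $F$ predict inside this region, where deviations are cheap for $\lambda$ but expensive for the log loss, and choose the labels so that $\Loss^{\ln}_F-\K^{\ln}$ grows without bound while $\Loss^\lambda_F-\K^\lambda$ stays bounded. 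The main obstacle is rigour: one must exhibit a genuinely \emph{computable} $F$ and bound the deficiency against the \emph{optimal} superloss process $\K^\lambda$ rather than merely against $\Loss^\lambda_F$, which requires controlling $\K^\lambda$ itself. This quantitative control is exactly what the subsequent degree-$k$ analysis provides, where $D^{\ln}_F(\zeta^T)$ is shown to grow almost as fast as $T^{1-1/k}$; I expect the necessity half of the lemma to be read off from that construction.
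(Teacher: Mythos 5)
Your overall architecture coincides with the paper's. The reduction ``$\lambda$ is fundamental iff $\lambda$-randomness implies log-randomness'' is exactly how the paper exploits Theorem~\ref{thm:simple}, and your ``if'' half --- rerunning the proof of Theorem~\ref{thm:simple} with the roles of $\lambda$ and the log loss interchanged, so that a translate of the $\alpha$-scaled log curve through $\lambda^p$ must lie Northeast of the $\lambda$-curve, tangency being supplied by propriety via \eqref{eq:proper-property} --- is precisely the paper's argument, which it packages as the quantitative bound $H_{\Lambda}D_F^{\Lambda}\ge\eta_{\lambda}D_F^{\lambda}-c_{\lambda,\Lambda}$. One local imprecision: for a \emph{single} $p$, tangency plus the curvature inequality $k_{\ln}(p)/\alpha\ge k_{\lambda}(p)$ at that one point is necessary but not sufficient for global containment of the translated curve in the $\lambda$-superprediction set; what you need, and what suffices by the same rolling-type comparison that underlies $\eta_{\lambda}=\inf_p k_{\lambda}(p)/k_{\ln}(p)$, is the inequality at \emph{every} point. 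Since you immediately quantify over all $p$ anyway, this does not damage your conclusion, and it matches the level of rigour at which the paper itself invokes the curvature characterization of mixability.

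The genuine gap is in the ``only if'' half, and it is the one you flag yourself: you never resolve how to control $\K^{\lambda}$ or how to exhibit a computable $F$, so your contraposition remains a plan rather than a proof. The paper's resolution is short enough to state. By \eqref{eq:proper-property}, $k_{\lambda}(p)/k_{\ln}(p)=1/\bigl((1-p)\lambda_0'(p)\bigr)$, so unboundedness of the ratio means $\inf_p(1-p)\lambda_0'(p)=0$, and by smoothness this degeneracy occurs at an endpoint, say at $p=0$, where it reads $\lambda_0'(0)=0$; hence the first nonvanishing derivative of $\lambda_0$ at $0$ has some order $k\ge2$ (the case where all derivatives vanish is treated separately). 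Now take $\zeta$ to be a sequence whose labels are all $0$. This single choice removes your ``main obstacle'': both $\K^{\ln}(\zeta^T)$ and $\K^{\lambda}(\zeta^T)$ stay bounded in $T$, so both deficiencies agree with the cumulative losses up to $O(1)$, and no further control of the optimal superloss process is needed. Finally let $F$ be the trivially computable algorithm predicting $p_t=t^{-1/k-\epsilon}$ with $\epsilon\in(0,1-1/k)$: Taylor's formula at $0$ gives $\lambda_0(p_t)=O(p_t^k)=O(t^{-1-k\epsilon})$, which is summable, so $\zeta$ is $\lambda$-random with respect to $F$, whereas $\Loss_F^{\ln}(\zeta^T)=-\sum_{t\le T}\ln(1-p_t)\sim\sum_{t\le T}p_t$ diverges like $T^{1-1/k-\epsilon}$. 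Note also that your phrase ``a fixed deviation in the predicted probability costs asymptotically far less under $\lambda$'' is misleading: a deviation bounded away from $0$ costs $\Theta(T)$ under both losses; the construction lives entirely on deviations $p_t\to0$, where the $\lambda$-cost $\asymp p_t^k$ is summable while the log-cost $\asymp p_t$ is not.
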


The proof of the part ``if'' of Lemma~\ref{lem:fundamentality}
goes along the same lines as the proof of Theorem~\ref{thm:simple},
and also shows that,
if $\lambda$ and $\Lambda$ are CPMS loss functions
such that
$$
  \eta_{\lambda}
  :=
  \inf_p
  \frac{k_{\lambda}(p)}{k_{\ln}(p)}
  >
  0
  \text{\quad and\quad}
  H_{\Lambda}
  :=
  \sup_p
  \frac{k_{\Lambda}(p)}{k_{\ln}(p)}
  <
  \infty
$$
are computable numbers,
then there exists $c_{\lambda,\Lambda}\in\mathbb{R}$ such that,
for any prediction algorithm $F$,
\begin{equation*} 
  H_{\Lambda} D_F^{\Lambda}
  \ge
  \eta_{\lambda} D_F^{\lambda} - c_{\lambda,\Lambda}.
\end{equation*}
We will call $H_{\Lambda}$ the \emph{fundamentality constant} of $\Lambda$
(analogously to $\eta_{\lambda}$ being called the mixability constant of $\lambda$).

Notice that the log loss function (perhaps scaled by multiplying by a positive constant)
is the only loss function for which the mixability and fundamentality constants coincide,
$\eta_{\ln}=H_{\ln}$.
Therefore, fundamental CPMS loss functions can be regarded as log-loss-like.

The part ``only if'' of Lemma~\ref{lem:fundamentality} will be proved below,
in the proof of Theorem~\ref{thm:criterion}.

The computation of $k_{\lambda}$ for the three basic loss functions using~\eqref{eq:curvature} gives:
\begin{itemize}
\item
  For the log loss function, the result is
  \begin{equation}\label{eq:log-curvature}
    k_{\ln}(p)
    =
    \frac{p(1-p)}{(p^2+(1-p)^2)^{3/2}}.
  \end{equation}
  \ifFULL\bluebegin
    Computations:
    $$
      \lambda_0(p)=-\ln(1-p),
      \lambda_1(p)=-\ln p,
      \lambda_0'=\frac{1}{1-p},
      \lambda_1'=-\frac{1}{p},
      \lambda_0''=\frac{1}{(1-p)^2},
      \lambda_1''=\frac{1}{p^2}.
    $$
    Therefore,
    \begin{multline*}
      k
      :=
      \frac{\lambda_0'\lambda_1''-\lambda_1'\lambda_0''}{(\lambda_0'^2+\lambda_1'^2)^{3/2}}
      =
      \frac
        {\frac{1}{1-p}\frac{1}{p^2}+\frac{1}{p}\frac{1}{(1-p)^2}}
        {\left(\frac{1}{(1-p)^2}+\frac{1}{p^2}\right)^{3/2}}
      =
      \frac
        {p^3(1-p)^3\left(\frac{1}{1-p}\frac{1}{p^2}+\frac{1}{p}\frac{1}{(1-p)^2}\right)}
        {\left(p^2(1-p)^2\left(\frac{1}{(1-p)^2}+\frac{1}{p^2}\right)\right)^{3/2}}\\
      =
      \frac
        {(1-p)^2p+p^2(1-p)}
        {\left(p^2+(1-p)^2\right)^{3/2}}
      =
      \frac
        {p(1-p)}
        {\left(p^2+(1-p)^2\right)^{3/2}}.
    \end{multline*}
  \blueend\fi
\item
  For the Brier loss function, the result is
  $$
    k_{\text{Brier}}(p)
    =
    \frac12 \frac{1}{(p^2+(1-p)^2)^{3/2}}.
  $$
  \ifFULL\bluebegin
    Computations:
    $$
      x=p^2,
      y=(1-p)^2,
      \lambda_0'=2p,
      \lambda_1'=-2(1-p),
      \lambda_0''=2,
      \lambda_1''=2.
    $$
    Therefore,
    $$
      k
      :=
      \frac{\lambda_0'\lambda_1''-\lambda_1'\lambda_0''}{(\lambda_0'^2+\lambda_1'^2)^{3/2}}
      =
      \frac
        {4p+4(1-p)}
        {\left(4p^2+4(1-p)^2\right)^{3/2}}
      =
      \frac12
      \frac
        {1}
        {\left(p^2+(1-p)^2\right)^{3/2}}.
    $$
  \blueend\fi
\item
  For the spherical loss function, the result is
  $$
    k_{\text{spher}}(p)
    =
    1.
  $$
\end{itemize}

We can plug the expression~\eqref{eq:log-curvature} for the signed curvature of the log loss function
into Lemma~\ref{lem:fundamentality} to obtain a more explicit statement.
Because of the propriety of $\lambda$, this statement can be simplified,
which gives the following corollary.
\begin{corollary}\label{cor:fundamentality-1}
  A CPMS loss function $\lambda$ is fundamental if and only if
  \begin{equation}\label{eq:fundamentality-1}
    \sup_p
    \frac
      {\lambda_0'(p)\lambda_1''(p)-\lambda_1'(p)\lambda_0''(p)}
      {\lambda_0'(p)\lambda_1'(p)(\lambda_1'(p)-\lambda_0'(p))}
    <
    \infty.
  \end{equation}
\end{corollary}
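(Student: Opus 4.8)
The plan is to start from Lemma~\ref{lem:fundamentality}, which already characterizes fundamentality as $\sup_p k_{\lambda}(p)/k_{\ln}(p)<\infty$. So there is nothing probabilistic left to do: the entire content of the corollary is the algebraic identity that the ratio $k_{\lambda}(p)/k_{\ln}(p)$ equals the integrand of \eqref{eq:fundamentality-1}, and the simplification is driven by propriety. First I would extract the first-order consequence of propriety. By \eqref{eq:proper}, for fixed $p\in(0,1)$ the quantity $p\lambda_1(q)+(1-p)\lambda_0(q)$ is minimized over $q$ at the interior point $q=p$; differentiating in $q$ under the smoothness assumptions and setting $q=p$ gives
\begin{equation*}
  p\lambda_1'(p)+(1-p)\lambda_0'(p)=0 .
\end{equation*}

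Next I would solve this relation for $p$ and $1-p$. Writing $a:=\lambda_0'(p)$ and $b:=\lambda_1'(p)$, monotonicity gives $a\ge0$ and $b\le0$, and the smoothness condition $(\lambda_0'(p),\lambda_1'(p))\ne0$ combined with the displayed relation forces $a>0$ and $b<0$ for $p\in(0,1)$ (if one vanished, the relation would make the other vanish too). Solving yields
\begin{equation*}
  p=\frac{a}{a-b},\qquad 1-p=\frac{-b}{a-b},
\end{equation*}
so that $p(1-p)=-ab/(a-b)^2$ and $p^2+(1-p)^2=(a^2+b^2)/(a-b)^2$, with $a-b>0$.

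Finally I would substitute these into the ratio assembled from \eqref{eq:curvature} and \eqref{eq:log-curvature},
\begin{equation*}
  \frac{k_{\lambda}(p)}{k_{\ln}(p)}
  =
  \frac{\lambda_0'(p)\lambda_1''(p)-\lambda_1'(p)\lambda_0''(p)}{(a^2+b^2)^{3/2}}
  \cdot
  \frac{(p^2+(1-p)^2)^{3/2}}{p(1-p)} .
\end{equation*}
The factors $(a^2+b^2)^{3/2}$ cancel, which is the whole point, and since $a-b>0$ we may write $((a-b)^2)^{3/2}=(a-b)^3$, so the remaining powers of $(a-b)$ collapse and leave
\begin{equation*}
  \frac{k_{\lambda}(p)}{k_{\ln}(p)}
  =
  \frac{\lambda_0'(p)\lambda_1''(p)-\lambda_1'(p)\lambda_0''(p)}{-ab(a-b)}
  =
  \frac{\lambda_0'(p)\lambda_1''(p)-\lambda_1'(p)\lambda_0''(p)}{\lambda_0'(p)\lambda_1'(p)\bigl(\lambda_1'(p)-\lambda_0'(p)\bigr)} ,
\end{equation*}
which is exactly the integrand of \eqref{eq:fundamentality-1}. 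Taking the supremum over $p$ and invoking Lemma~\ref{lem:fundamentality} completes the proof.

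The computation is routine, so the main obstacle is conceptual rather than technical, and lies in two places. The first is recognizing that propriety, via the stationarity relation $p\lambda_1'+(1-p)\lambda_0'=0$, is precisely what converts the purely geometric quantities $p(1-p)$ and $p^2+(1-p)^2$ of the log prediction curve into algebraic expressions in $\lambda_0'$ and $\lambda_1'$ that share the common factor $(a-b)$ with $k_{\lambda}$; without it the two suprema could not be matched term by term. The second is the sign bookkeeping: one must confirm $a>0$, $b<0$, and hence $a-b>0$ on $(0,1)$, both to justify $((a-b)^2)^{3/2}=(a-b)^3$ and to guarantee that the denominator $\lambda_0'(p)\lambda_1'(p)(\lambda_1'(p)-\lambda_0'(p))$ is nonzero, so that the expression in \eqref{eq:fundamentality-1} is well defined and the equality of the two suprema is genuine.
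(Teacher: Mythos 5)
Your proof is correct and follows essentially the same route as the paper's: both reduce the corollary to Lemma~\ref{lem:fundamentality} and then use the first-order propriety relation $p\lambda_1'(p)+(1-p)\lambda_0'(p)=0$ (the paper's \eqref{eq:proper-property}) to show that the ratio $k_{\lambda}(p)/k_{\ln}(p)$ equals the expression in \eqref{eq:fundamentality-1}. The only differences are presentational: you solve for $p$ and $1-p$ in terms of the derivatives and track the signs explicitly, whereas the paper states the equivalent identity and leaves the same routine substitution to the reader.
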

\begin{proof}
  In view of the expressions~\eqref{eq:curvature} and~\eqref{eq:log-curvature},
  the condition in Lemma~\ref{lem:fundamentality} can be written as
  \begin{equation*}
    \sup_p
    \frac
      {\lambda_0'(p)\lambda_1''(p)-\lambda_1'(p)\lambda_0''(p)}
      {(\lambda_0'(p)^2+\lambda_1'(p)^2)^{3/2}}
    \frac{(p^2+(1-p)^2)^{3/2}}{p(1-p)}
    <
    \infty.
  \end{equation*}
  Therefore, it suffices to check that
  \begin{equation*}
    \frac{(\lambda_0'(p)^2+\lambda_1'(p)^2)^{3/2}}{\lambda_0'(p)\lambda_1'(p)(\lambda_1'(p)-\lambda_0'(p))}
    =
    \frac{(p^2+(1-p)^2)^{3/2}}{p(1-p)}.
  \end{equation*}
  The last equality follows from
  \begin{equation}\label{eq:proper-property}
    \frac{\lambda_1'(p)}{\lambda_0'(p)} = \frac{p-1}{p},
  \end{equation}
  which in turn follows from the propriety of $\lambda$.
  \ifCONFnotPURE\qed\fi
\end{proof}

It is instructive to compare the criterion~\eqref{eq:fundamentality-1}
with the well-known criterion
\begin{equation}\label{eq:mixability-2}
  \inf_p
  \frac
    {\lambda_0'(p)\lambda_1''(p)-\lambda_1'(p)\lambda_0''(p)}
    {\lambda_0'(p)\lambda_1'(p)(\lambda_1'(p)-\lambda_0'(p))}
  >
  0
\end{equation}
for $\lambda$ being mixable
(see, e.g., \cite{Haussler/etal:1998} or \cite{Kalnishkan/Vovk:2000}, Theorem~2;
it goes back to \cite{Vovk:1990}, Lemma~1).
The criterion~\eqref{eq:mixability-2} can be derived from \eqref{eq:mixability-1}
as in the proof of Corollary~\ref{cor:fundamentality-1}.

\begin{proof}[Proof of Theorem~\ref{thm:criterion}]   
  Differentiating~\eqref{eq:proper-property} we obtain
  $$
    \frac{\lambda_1''(p)\lambda_0'(p)-\lambda_1'(p)\lambda_0''(p)}{\lambda_0'(p)^2}
    =
    p^{-2},
  $$
  and the fundamentality constant~\eqref{eq:fundamentality-1} of $\lambda$ is
  \begin{multline*}
    \sup_p \frac{p^{-2}\lambda_0'(p)^2}{\lambda_0'(p)\lambda_1'(p)(\lambda_1'(p)-\lambda_0'(p))}
    =
    \sup_p \frac{p^{-2}}{\lambda_0'(p)(\lambda_1'(p)/\lambda_0'(p))(\lambda_1'(p)/\lambda_0'(p)-1)}\\
    =
    \sup_p \frac{p^{-2}}{\lambda_0'(p)(1-1/p)(-1/p)}
    =
    \sup_p \frac{1}{\lambda_0'(p)(1-p)},
  \end{multline*}
  where we have used \eqref{eq:proper-property}.
  This gives us~\eqref{eq:criterion-a};
  in combination with~\eqref{eq:proper-property}
  we get~\eqref{eq:criterion-b}.

  Let us now prove the part ``only if'' of Theorem~\ref{thm:criterion}
  (partly following the argument given after Proposition~16 of \cite{Vovk:2001}).
  According to~\eqref{eq:proper-property},
  \eqref{eq:criterion-a} and \eqref{eq:criterion-b} are equivalent.
  Suppose that
  \begin{equation*} 
    \inf_p
    (1-p)\lambda'_0(p)
    =
    0,
  \end{equation*}
  and let us check that $\lambda$ is not fundamental.
  By the smoothness assumptions,
  we have $(1-p)\lambda'_0(p)=0$ either for $p=0$ or for $p=1$.
  Suppose, for concreteness, that $(1-p)\lambda'_0(p)=0$ for $p=0$
  (if $(1-p)\lambda'_0(p)=0$ for $p=1$, we will have 
  $(-p)\lambda'_1(p)=0$ for $p=1$,
  and we can apply the same argument as below for $p=1$ in place of $p=0$).
  Let $k$ be such that $\lambda^{(k)}_0(0)>0$ but $\lambda^{(i)}_0(0)=0$ for all $i<k$;
  we know that $k\ge2$
  (the easy case where $\lambda^{(i)}_0(0)=0$ for all $i$ should be considered separately).
  Consider
  any data sequence $\zeta=(x_1,y_1,x_2,y_2,\ldots)\in\mathbf{Z}^{\infty}$
  in which all labels are 0: $y_1=y_2=\cdots=0$.
  We then have $\sup_T\K^{\ln}(\zeta^T)<\infty$ and $\sup_T\K^{\lambda}(\zeta^T)<\infty$.
  Let $F$ be the prediction algorithm that outputs $p_t:=t^{-1/k-\epsilon}$ at step $t$,
  where $\epsilon\in(0,1-1/k)$.
  Then $\zeta$ is random with respect to $F$ under $\lambda$
  since the loss of this prediction algorithm over the first $T$ steps is
  $$
    \sum_{t=1}^T
    \lambda_0(p_t)
    \le
    2
    \sum_{t=1}^T
    \frac{\lambda^{(k)}_0(0)}{k!}
     p_t^k
     +
     O(1)
  $$
  (we have used Taylor's approximation for $\lambda_0$)
  and the series $\sum_t p_t^k$ is convergent.
  On the other hand, the randomness deficiency of $\zeta^T$ with respect to $F$
  under the log loss function grows as
  \begin{equation*}
    -\sum_{t=1}^T
    \ln(1-p_t)
    \sim
    \sum_{t=1}^T
    p_t
    \sim
    \frac{k}{k-1-k\epsilon}
    T^{1-1/k-\epsilon}.
    \qedhere   
  \end{equation*}
\end{proof}

Notice that the criterion of mixability~\eqref{eq:mixability-2} can be simplified
when we use~\eqref{eq:proper-property}:
it becomes
\begin{equation*}
  \sup_p
  (1-p)\lambda_0'(p)
  <
  \infty
\end{equation*}
or, equivalently,
\begin{equation*}
  \sup_p
  (-p)\lambda_1'(p)
  <
  \infty.
\end{equation*}
The function
$
  (1-p)\lambda_0'(p)
  =
  (-p)\lambda_1'(p)
$
can be computed as
\begin{itemize}
\item
  $1$ in the case of the log loss function;
\item
  $2p(1-p)$ in the case of the Brier loss function;
\item
  $p(1-p)(p^2+(1-p)^2)^{-3/2}$ in the case of the spherical loss function.
\end{itemize}
Therefore, all three loss functions are mixable,
but only the log loss function is fundamental.

It is common in experimental machine learning to truncate allowed probabilistic predictions
to the interval $[\epsilon,1-\epsilon]$ for a small constant $\epsilon>0$
(this boils down to cutting off the ends of the prediction sets
corresponding to the slopes below $\epsilon$ and above $1-\epsilon$).
It is easy to check that in this case all CPMS loss functions lead to the same notion of randomness.
\begin{corollary}
  CPMS loss functions $\lambda$ and $\Lambda$ restricted to $p\in[\epsilon,1-\epsilon]$,
  where $\epsilon>0$, lead to the same notion of randomness.
\end{corollary}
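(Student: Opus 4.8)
The plan is to reduce the claim to the quantitative two-sided inequality recorded immediately after Lemma~\ref{lem:fundamentality}, applied to the two restricted loss functions in both directions. The essential point is a compactness argument: for any CPMS loss function $\lambda$, the ratio $k_{\lambda}(p)/k_{\ln}(p)$ is continuous in $p$ and, by mixability together with \eqref{eq:log-curvature}, strictly positive on $(0,1)$. Restricting attention to the compact interval $[\epsilon,1-\epsilon]$, this ratio therefore attains a strictly positive minimum and a finite maximum there. Writing $\eta_{\lambda}^{\epsilon}:=\inf_{p\in[\epsilon,1-\epsilon]}k_{\lambda}(p)/k_{\ln}(p)$ and $H_{\lambda}^{\epsilon}:=\sup_{p\in[\epsilon,1-\epsilon]}k_{\lambda}(p)/k_{\ln}(p)$, I would observe that $0<\eta_{\lambda}^{\epsilon}\le H_{\lambda}^{\epsilon}<\infty$, and likewise for $\Lambda$. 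Reading the criterion \eqref{eq:mixability-1} and Lemma~\ref{lem:fundamentality} over the restricted range, each restricted loss function is then simultaneously mixable and fundamental.

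Before using the predictive-complexity machinery for the restricted losses I would check that it still applies, i.e.\ that truncating the prediction set to $p\in[\epsilon,1-\epsilon]$ preserves mixability. This is where the only genuinely geometric work lies, but it is short: for $\eta\le\eta_{\lambda}^{\epsilon}$ the $\eta$-mixability set of the restricted loss is exactly the intersection of the (convex) $\eta$-mixability set of the full loss with the two half-planes $\{u\le e^{-\eta\lambda_0(\epsilon)}\}$ and $\{v\le e^{-\eta\lambda_1(1-\epsilon)}\}$, and an intersection of convex sets is convex. Hence the restricted loss is mixable, so $\K$ and the randomness deficiency are well defined for it, and the argument underlying Lemma~\ref{lem:fundamentality} applies verbatim with the infima and suprema now taken over $[\epsilon,1-\epsilon]$.

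Finally I would invoke the two-sided inequality twice. Applied with the roles of the two losses exchanged it yields a constant $c$ with $H_{\lambda}^{\epsilon}D_F^{\lambda}\ge\eta_{\Lambda}^{\epsilon}D_F^{\Lambda}-c$, whence $D_F^{\Lambda}\le(H_{\lambda}^{\epsilon}/\eta_{\Lambda}^{\epsilon})D_F^{\lambda}+c/\eta_{\Lambda}^{\epsilon}$; since each randomness deficiency is bounded below and randomness means boundedness from above, this shows that randomness under the restricted $\lambda$ implies randomness under the restricted $\Lambda$, and the symmetric application gives the converse. The one remaining subtlety is that the quantitative inequality is stated for computable mixability and fundamentality constants; assuming $\epsilon$ is computable this causes no difficulty, since $k_{\lambda}$, $k_{\Lambda}$ and $k_{\ln}$ are computable and the inequality holds already with any computable $\eta\le\eta_{\Lambda}^{\epsilon}$ and any computable $H\ge H_{\lambda}^{\epsilon}$, so computable rational under- and over-estimates suffice. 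I expect the mixability-preservation step to be the main obstacle to writing the argument cleanly, although it is conceptually elementary.
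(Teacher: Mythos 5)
Your proposal is correct and takes essentially the same route as the paper: the paper's (implicit) argument is exactly the two-sided application of the quantitative inequality following Lemma~\ref{lem:fundamentality}, using the fact that on the compact interval $[\epsilon,1-\epsilon]$ the curvature ratios are bounded above and bounded away from zero, so that $D_F^{\lambda}$ and $D_F^{\Lambda}$ coincide up to a multiplicative factor and an additive constant. Your explicit check that truncation preserves mixability (the restricted mixability set being the intersection of the full convex set with two half-planes) is a detail the paper passes over in silence, and it is a correct and worthwhile addition.
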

We can make the corollary more precise as follows:
for prediction algorithms $F$ restricted to $[\epsilon,1-\epsilon]$,
$D^{\lambda}_F$ and $D^{\Lambda}_F$ coincide to within a factor of
$$
  \max
  \left(
    \sup_{p\in[\epsilon,1-\epsilon]}
    \frac{k_{\lambda}(p)}{k_{\Lambda}(p)},
    \sup_{p\in[\epsilon,1-\epsilon]}
    \frac{k_{\Lambda}(p)}{k_{\lambda}(p)}
  \right)
$$
and an additive constant.

\section{Frequently Asked Questions}
\label{sec:FAQ}

This section is more discursive than the previous ones;
``frequently'' in its title means ``at least once''
(but with a reasonable expectation that a typical reader might well ask similar questions).
\begin{Question}
  What is the role of the requirement of propriety in Theorem~\ref{thm:simple}?
\end{Question}
The theorem says that the log loss function leads to the most restrictive notion of randomness:
if a sequence is random with respect to some prediction algorithm under the log loss function,
then it is random with respect to the ``same'' prediction algorithm under an arbitrary CPM loss function.
One should explain, however, what is meant by the same prediction algorithm,
because of the freedom in parameterization
(say, we can replace each prediction $p$ by $p^2$).
The requirement of propriety imposes a canonical parameterization.
\begin{Question}
  What is the role of the requirement of mixability in Theorem~\ref{thm:simple}?
\end{Question}
The requirement of mixability ensures the existence of predictive complexity,
which is used in the definition of predictive randomness.
\begin{Question}
  Mixability is sufficient for the existence of predictive complexity
  (for computable loss functions).
  Is it also necessary?
\end{Question}
Yes, it is: see Theorem~1 in \cite{Kalnishkan/etal:2004ALT}.
\begin{Question}
  What is the geometric intuition behind the notions of propriety and mixability?
\end{Question}
  The intuitions behind the two notions overlap;
  both involve requirements of convexity of the ``superprediction set''
  (the area Northeast of the prediction set \eqref{eq:prediction-set}).
  Let us suppose that the loss function $\lambda$ is continuous in the prediction $p$,
  so that the prediction set is a curve.
  Propriety then means that the superprediction set is strictly convex
  (in particular, the prediction set has no straight segments)
  and that the points on the prediction set are indexed in a canonical way
  (namely, each such point is indexed by $1/(1-s)$
  where $s<0$ is the slope of the tangent line to the prediction set at that point:
  cf.\ \eqref{eq:proper-property}).
  Mixability means that the superprediction set is convex in a stronger sense:
  it stays convex after being transformed by the mapping
  $(x,y)\in[0,\infty]^2\mapsto(e^{-\eta x},e^{-\eta y})$ for some $\eta>0$.
\begin{Question}
  Why should we consider not only the log loss function
  (which nicely corresponds to probability distributions) but also other loss functions?
  You say ``the log loss function, being most selective,
  should be preferred to the alternatives such as Brier or spherical loss''.
  But this does not explain why these other loss functions were interesting in the first place.
\end{Question}
Loss functions different from the log loss function are widely used in practice;
in particular, the Brier loss function is at least as popular as
(and perhaps even more popular than) the log loss function in machine learning:
see, e.g., the extensive empirical study \cite{Caruana/NM:2006}.
An important reason for the popularity of Brier loss
is that the log loss function often leads to infinite average losses on large test sets
for state-of-the-art prediction algorithms, which is considered to be ``unfair'',
and some researchers even believe that any reasonable loss function should be bounded.

\ifFULL\bluebegin
  \section{Open problems}

  \begin{enumerate}
  \item
    Prove that \eqref{eq:positive} cannot be improved (the coefficient $\eta$ is optimal).
    Prove a similar statement for~\eqref{eq:super-positive}
    (the two constants $\eta_{\lambda}$ and $H_{\Lambda}$ now, however,
    should be merged into a single constant to achieve optimality).
  \item
    Give the best possible quantitative version of Theorem~\ref{thm:criterion}.
    In particular, is the growth rate about $T^{1/2}$ optimal for the Brier loss function
    (and other degree 2 loss functions)?
    Show that the rate is about $T^{1-1/k}$ for CPMS loss functions of degree $k$.
  \item
    Get rid of the assumption of mixability replacing Martin-L\"of randomness
    by Schnorr randomness.
  \item
    Get rid of the assumption of smoothness of the loss functions.
  \end{enumerate}
\blueend\fi

\section{Conclusion}

This note offers an answer to the problem of choosing a loss function
for evaluating probabilistic prediction algorithms
in experimental machine learning.
Our answer is that the log loss function, being most selective,
should be preferred to the alternatives such as Brier or spherical loss.
\ifCONF
  This answer, however, remains asymptotic (involving unspecified constants)
  and raises further questions.
  To make it really practical, we need to restrict our generalized theory of algorithmic randomness,
  as Yuri did in a different context.
\fi

\subsubsection*{Acknowledgments.}
I am grateful to Mitya Adamskiy, Yuri Kalnishkan, Ilia Nouretdinov, Ivan Petej, and Vladimir V'yugin
for useful discussions.
Thanks to an anonymous reviewer of the conference version of this note
whose remarks prompted me to add Section~\ref{sec:FAQ}
(and were used in both questions and answers).
This work has been supported by EPSRC (grant EP/K033344/1)
and the Air Force Office of Scientific Research (grant ``Semantic Completions'').

\end{document}